\theoremstyle{definition}
\newtheorem{theorem}{Theorem}[section]
\newtheorem{definition}[theorem]{Definition}
\newtheorem{lemma}[theorem]{Lemma}
\theoremstyle{remark}
\newtheorem{remark}{Remark}[section]
\renewcommand{\leq}{\leqslant} 
\renewcommand{\geq}{\geqslant}
\newcommand{\cM}{\mathcal{M}}
\newcommand{\bE}{\mathbb{E}}
\newcommand{\bP}{\mathbb{P}}\newcommand{\bR}{\mathbb{R}}
\newcommand{\bS}{\mathbb{S}}
\DeclareMathOperator{\argmin}{argmin}
\begin{document}
%
\title{Multiclass MinMax Rank Aggregation}

\author{
  \IEEEauthorblockN{
    Pan~Li~
    and
    Olgica~Milenkovic \\
    }
  {\normalsize
    \begin{tabular}{ccc}
      ECE Department, University of Illinois at Urbana-Champaign \\
      Email: panli2@illinois.edu, milenkov@illinois.edu
    \end{tabular}}\vspace{-3ex}
    }
\maketitle


%

\begin{abstract}
We introduce a new family of minmax rank aggregation problems under two distance measures, the Kendall $\tau$
and the Spearman footrule. As the problems are NP-hard, we proceed to describe a number
of constant-approximation algorithms for solving them. We conclude with illustrative applications of the aggregation 
methods on the Mallows model and genomic data.
\end{abstract}

%
\IEEEpeerreviewmaketitle
\vspace{-0.05in}
\section{Introduction}
\vspace{-0.05in}
Rankings, a special form of ordinal data, have received significant attention in the machine learning community as they arise in a number of important application domains, such as recommender systems, social voting and product placement platforms. Of particular importance are rankings of the form of linear orders (permutations) and partial rankings (weak orders), which are frequently obtained through conversion from ratings. One of the main processing tasks for rankings is \emph{rank aggregation}, which often involves evaluating the median of a set of permutations or partial rankings under a suitably chosen distance function~\cite{ailon2008aggregating,bartholdi1989voting,diaconis1977spearman,dwork2001rankw,kemeny1959mathematics,kenyon2007rank,van2007deterministic}. The median rank aggregation problem under the Kendall $\tau$ distance was introduced by Kemeny~\cite{kemeny1959mathematics}, and was proved to be NP-hard by Bartholdi et al.~\cite{bartholdi1989voting}. A number of approximation algorithms for the problem have been described in~\cite{ailon2008aggregating}, mostly pertaining to permutations; a corresponding PTAS (polynomial time approximation scheme) was proposed in~\cite{kenyon2007rank}. In the context of partial ranking aggregation, known solutions include the results of~\cite{ailon2010aggregation,fagin2004comparing}. Median aggregation under other distance functions has received less attention, one notable exception being the Spearman rank aggregation problem~\cite{diaconis1977spearman}, which is known to provide a constant approximation for Kendall $\tau$ aggregation using a polynomial time algorithm based on weighted bipartite matching~\cite{dwork2001rankw}. 

We propose to investigate a broad new family of rank aggregation problems in which the median is replaced by a minmax type of function and where the rankings are grouped in classes. More precisely, assume that there are $C \geq 1$ different classes of rankings and let $\Sigma^k=\{\sigma_1^{k},\sigma_2^k,...,\sigma_{m_k}^k\}$ be the set of $m_k=|\Sigma^k|$ rankings belonging to the class labeled by $k \in [C]$. Our minmax rank aggregation problem may be succinctly described as follows:  Output a ranking $\pi$ that agrees in the minmax sense with the rankings belonging to the different classes. Rigorously, we seek to solve the following optimization problem:
\begin{align*}
\textbf{MinMax:} \quad \min_{\pi}\max_{k} \lambda_k \, d(\pi, \Sigma^k),
\end{align*}
where $\lambda_k>0$ represent the costs of violating the agreement with rankings in class $k$. In the above formulation, $d(\pi, \Sigma)$ stands for a distance between a ranking or partial ranking $\pi$ and a \emph{set of rankings} $\Sigma^k$, and it may be chosen to be of the form of a median distance (which equals the total sum of distances between $\pi$ and the elements of $\Sigma^k$) or a minimum distance (which equals the smallest distance between $\pi$ and an element in $\Sigma^k$). The above described \textbf{MinMax} problem is motivated by a number of applications in which classes of rankings arise due to different ranking criteria or properties of the ranking entities (social platforms) or due to prior knowledge of different similarity degrees in groups of rankings (genome evolution). The minmax criteria is typically used when trying to ensure that the aggregate violates each vote (class of votes) to roughly the same extent.

We start our analysis with the \textbf{MinMax} problem with $C=1$ and under the median and minimum distance, and then proceed to study the problem for the case of arbitrary values of $C$ and $m_k$, $k=1,\ldots,C$.
For both the case of the Kendall $\tau$ as well as the Spearman footrule in the median and minimum distance setting, the $\textbf{MinMax}$ problems may be shown to be NP-hard by using the corresponding results of~\cite{bachmaier2015hardness}. In particular, the work in~\cite{bachmaier2015hardness} outlines a general framework for proving NP-hardness results for the median, single class min-max-aggregation problem under different ranking distances. Nevertheless, only a handful of approximation algorithms were proposed even for this basic min-max-aggregation form: To the best of our knowledge, the only provable algorithm for the single class \textbf{MinMax} under the minimum distance measure was provided in~\cite{bachmaier2015hardness}. The algorithm takes the form of the well studied "pick-a-permutation'' method, and tends to perform poorly in practice. 

The main results of our work include families of constant approximation algorithm for the new, general family of multiclass \textbf{MinMax} problems, both under the median and minimum class distance, evaluated using the Kendall $\tau$ and Spearman footrule. Furthermore, we illustrate the use of the new aggregation paradigm on the problem of finding an ancestral genome arrangement for mitochondrial DNA under the tandem duplication model for genomes~\cite{chaudhuri2006tandem}.
\section{Mathematical Preliminaries} \label{sec:prelim}
Let $S$ denote a set of $n$ elements, which without loss of generality we set to $[n]\equiv\{{1,2,\ldots,n\}}$. 
A ranking is an ordering of a subset of elements $Q$ of $[n]$ according to a 
predefined rule. When $Q=[n]$, the resulting order is referred to as a permutation. When the rankings include ties, they are referred to as partial rankings~\cite{fagin2004comparing}.

More precisely, a permutation is a bijection $\sigma \, : \, [n] \rightarrow [n]$, and the set of permutations over $[n]$ forms the symmetric group of order $n!$, denoted by $\bS_n$. 
For any $\sigma \in \bS_n$ and $x \in [n]$, $\sigma(x)$ denotes the rank (position) of the element $x$ in $\sigma$. We say that $x$ is ranked higher than $y$ (ranked lower than $y$) iff $\sigma(x)<\sigma(y)$ ($\sigma(x)>\sigma(y)$). The inverse of a permutation $\sigma$ is denoted by $\sigma^{-1}: [n]\rightarrow [n]$. Clearly, $\sigma^{-1}(t)$ represents the element ranked at position $t$ in $\sigma$. Similarly, partial rankings~\cite{fagin2004comparing} represent a mapping over $[n]$ in which there may exist two elements $x \neq y$ such that $\sigma(x)=\sigma(y)$. It is common to use $\sigma(x)$ to denote the position of the element $x$ in the partial ranking $\sigma$, and to define it as
\begin{align} \label{eq:position}
\sigma(x)&\triangleq |\{y\in [n]: y\;\text{is ranked higher than}\;x\}| \notag \\
&+\frac{1}{2}(|\{y\in [n]: y\;\text{is tied with}\;x\}|+1).
\end{align}

A number of distance functions between rankings were proposed in the literature~\cite{diaconis1977spearman,fagin2004comparing,stanley2011enumerative}. One distance function counts the number of \emph{adjacent} transpositions needed to convert a permutation into another. Adjacent transpositions generate $\mathbb{S}_n$, i.e., any permutation $\pi\in\mathbb{S}_n$ can be converted into another permutation $\sigma\in\mathbb{S}_n$ through a sequence of adjacent transpositions~\cite{stanley2011enumerative}. The smallest number of adjacent transpositions needed to convert a permutation $\pi$ into another permutation $\sigma$ is termed the Kendall
$\tau$ distance, denoted by $d_{\tau}(\pi,\sigma)$. The Kendall $\tau$ distance between two permutations $\pi$ and $\sigma$ over $[n]$ also equals the number of pairwise inversions of elements of the two permutations:
\begin{align}\label{fullmetric}
d_{\tau}(\sigma,\pi)= |\{(x,y):  \pi(x) > \pi(y),\sigma(x) < \sigma(y)\}|.
\end{align}
Another \emph{positional} distance measure is the Spearman footrule, 
$$d_S(\sigma,\pi) = \sum_{x\in [n]} |\sigma(x) -\pi(x)|.$$ 
It can be shown that $d_{\tau}(\pi,\sigma) \leq d_S(\pi,\sigma) \leq 2 d_{\tau}(\pi,\sigma)$~\cite{diaconis1977spearman}.

One may similarly define a generalization of the Kendall $\tau$ distance for partial rankings $\pi$ and $\sigma$ over the set $[n]$.
This distance is known as the Kemeny distance, and equals 
\begin{align}\label{partialmetric}
d_{K}(\pi,\sigma)=&|\{(x,y): \pi(x)>\sigma(y),\pi(x)<\sigma(y)\}|\nonumber\\
+&\frac{1}{2}|\{(x,y): \pi(x)=\pi(y),\sigma(x)>\sigma(y) \nonumber \\
\text{\, or \,}&\;\pi(x)>\pi(y),\sigma(x)=\sigma(y)\}|.
\end{align}
The Spearman footrule analogue for partial rankings~\cite{fagin2004comparing} equals the sum of the absolute differences between ``positions'' of elements in the partial rankings,
\begin{align*}
d_{prS}(\sigma,\pi) = \sum_{x\in [n]} |\sigma(x) -\pi(x)|,
\end{align*}
where positions are as defined in~\eqref{eq:position}.
The Spearman footrule distance for partial rankings is a $2$-approximation for the Kemeny distance~\cite{fagin2004comparing}.

The notion of a distance between two rankings has an important extension in terms of a distance between a ranking and a set of rankings, which we refer to as rank-set distances. We focus our attention on two types of rank-set distances, defined below. For compactness, we use $\star$ to denote an arbitrary distance on pairs of rankings, but focus our attention throughout the paper on 
$\star \in\{\tau, S, K, prS\}$. 
\begin{definition}
Suppose that $\pi$ is a ranking and that $\Sigma$ is a set of rankings. Given a distance between two rankings $d_\star(\cdot,\cdot)$, the median-$\star$ distance ($med-\star$) between $\pi$ and $\Sigma$ equals
\begin{align*}
d_{med-\star}(\pi, \Sigma)=\frac{1}{|\Sigma|}\sum_{\sigma\in\Sigma} d_{\star}(\pi,\sigma).
\end{align*}
\end{definition}
\begin{definition}
Suppose that $\pi$ is a ranking and that $\Sigma$ is a set of rankings. Given a distance between two rankings $d_\star(\cdot,\cdot)$, the min-$\star$ distance ($min-\star$) between $\pi$ and $\Sigma$ is defined as 
\begin{align*}
d_{min-\star}(\pi, \Sigma)=\min_{\sigma\in\Sigma} d_{\star}(\pi,\sigma).
\end{align*}
\end{definition}
We recall that the focal problem of this work is to find constant approximation algorithms for the \textbf{MinMax} rank aggregation problem, which reads as
\begin{align*}
\textbf{MinMax:} \quad \min_{\pi}\max_{k} \lambda_k \,d(\pi, \Sigma^k),
\end{align*}
where $d(\pi, \Sigma^k)$ is a $med-\star$ or $min-\star$ distance, with $\star \in\{\tau, S, K, prS\}$. In our future analysis we use $\lambda^*\triangleq\max_{k}\lambda_k$ and $\cM\triangleq\{k:\lambda_k=\lambda^*\}$. Furthermore, we let $\pi^*$ denote the argument of the optimal solution of the \textbf{MinMax} problem and let $W=\max_{k} \lambda_{k}\, d(\pi^*, \Sigma^k)$.

\section{Approximate \textbf{MinMax} Aggregation}

As previously pointed out, the \textbf{MinMax} problem under both the $med-\star$ and $min-\star$ can be shown to be NP-hard using the results of~\cite{bachmaier2015hardness}, which established hardness for the special case $m_k=1$ and $d(\cdot, \cdot)$ a pseudometric. 
We hence focus on devising approximation algorithms for the \textbf{MinMax} problem. 
\subsection{Permutations}
We first consider ordinal data of the form of permutations. We show that a simple algorithm, which we term \emph{Pick-Rnd-Perm}, can achieve a $2$-approximation in expectation for the case of the $med-\star$ problem whenever $d_\star(\cdot,\cdot)$ is a pseudometric. Then, for $\star\in\{\tau, S\}$, we describe two $2$-approximation algorithms that use a combination of convex optimization and rounding procedures and offer significantly better empirical performance than random selection. Finally, we describe a $2$-approximation algorithm for the $min-\star$ problems when $d_\star(\cdot,\cdot)$ is a pseudometric. The selection algorithm essentially transforms the $min-\star$ problem into a $med-\star$ problem: Thus, the algorithms developed for approximating multiclass 
$med-\star$ problems may be used to approximate corresponding instances of the $min-\star$ problem. 

\textbf{The Pick-Rnd-Perm Algorithm.} Pick a permutation $\pi$ from $\cup_{k\in\cM}\Sigma_k$ uniformly at random. 
\begin{theorem} 
For the $d_{med-\star}(\cdot,\cdot)$ distance, where $d_\star$ is a pseudometric, the Pick-Rnd-Perm algorithm produces a $2$-approximation of the $med-\star$ problem. 
\end{theorem}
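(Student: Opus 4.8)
The plan is to run a standard ``pick-a-point'' argument driven by the triangle inequality of the pseudometric $d_\star$, but with the sampling confined to the maximum-weight classes so that the expected cost can be aligned with the optimum $W$. Fix an arbitrary class $k$ and an arbitrary realization of the randomly picked permutation $\pi$. First I would apply the triangle inequality termwise inside the median distance: for every $\sigma\in\Sigma^k$, $d_\star(\pi,\sigma)\le d_\star(\pi,\pi^*)+d_\star(\pi^*,\sigma)$. Averaging over $\sigma\in\Sigma^k$ and multiplying by $\lambda_k$ gives
$$\lambda_k\, d_{med-\star}(\pi,\Sigma^k)\le \lambda_k\, d_\star(\pi,\pi^*)+\lambda_k\, d_{med-\star}(\pi^*,\Sigma^k).$$
The second summand is at most $W$ by the definition of $W$, and since $\lambda_k\le\lambda^*$ the first summand is at most $\lambda^*\, d_\star(\pi,\pi^*)$. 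Because this holds for every $k$ and every realization of $\pi$, I can take the maximum over $k$ and then the expectation to obtain $\E[\max_k \lambda_k\, d_{med-\star}(\pi,\Sigma^k)]\le \lambda^*\,\E[d_\star(\pi,\pi^*)]+W$.

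It then remains to show $\lambda^*\,\E[d_\star(\pi,\pi^*)]\le W$, and this is the step where the specific sampling set matters. Writing $N=\sum_{k\in\cM} m_k$ for the size of $\cup_{k\in\cM}\Sigma^k$, uniform sampling together with the symmetry of the pseudometric yields
$$\E[d_\star(\pi,\pi^*)]=\frac{1}{N}\sum_{k\in\cM}\sum_{\sigma\in\Sigma^k} d_\star(\pi^*,\sigma)=\frac{1}{N}\sum_{k\in\cM} m_k\, d_{med-\star}(\pi^*,\Sigma^k).$$
For each $k\in\cM$ we have $\lambda_k=\lambda^*$, so $d_{med-\star}(\pi^*,\Sigma^k)\le W/\lambda^*$ by the definition of $W$; substituting this bound and using $\sum_{k\in\cM} m_k=N$ collapses the right-hand side to $W/\lambda^*$. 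Multiplying by $\lambda^*$ gives exactly the desired inequality, and combining with the previous paragraph produces the factor $2$.

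The argument is short, so the ``hard part'' is a modeling observation rather than a technical hurdle: one must recognize why the permutation is drawn from $\cup_{k\in\cM}\Sigma^k$ and not from all classes. Were $\pi$ sampled from a low-weight class, the factor $\lambda^*$ in $\lambda^*\, d_\star(\pi,\pi^*)$ could not be absorbed into $W$, since $W$ only controls the $\lambda^*$-weighted median distances of the classes in $\cM$; restricting the sample to $\cM$ is precisely what forces $\E[d_\star(\pi,\pi^*)]\le W/\lambda^*$. I would also flag two points to state cleanly in the writeup: the guarantee is in expectation, so one either reports expected cost or derandomizes by evaluating all $N$ candidates and keeping the best; and the only properties of $d_\star$ used are the triangle inequality and symmetry, which is why the result holds uniformly for $\star\in\{\tau,S,K,prS\}$.
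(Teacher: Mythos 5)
Your proof is correct and takes essentially the same route as the paper's: a termwise triangle-inequality bound $\lambda_k\, d_{med-\star}(\pi,\Sigma^k)\leq \lambda^* d_\star(\pi,\pi^*)+W$, followed by taking the maximum pointwise and then the expectation. The only difference is that you spell out the step the paper leaves implicit, namely why $\lambda^*\,\E[d_\star(\pi,\pi^*)]\leq W$ --- which holds precisely because the sample is drawn uniformly from $\cup_{k\in\cM}\Sigma^k$, so every sampled class has weight exactly $\lambda^*$ and contributes median distance at most $W/\lambda^*$ --- and this is a worthwhile clarification rather than a deviation.
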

\begin{proof}
For a given $k$, 
\begin{align*}
&\lambda_k d_{med-\star}(\pi,\Sigma^k)=\frac{\lambda_{k}}{m_k}\sum_{i=1}^{m_k} d_{\star}(\pi,\Sigma^k)\\
\leq&  \lambda_{k}\left[d_{\star}(\pi, \pi^*) +d_{m-\star}(\pi^*,\Sigma^k)\right] \leq  \lambda^* d_{\star}(\pi, \pi^*)  + W.
\end{align*}
By calculating the expectation, we obtain 
\begin{align*}
\bE[\max_k\lambda_{k}d_{med-\star}(\pi,\Sigma^k)] \leq \lambda^*\bE[d_{\star}(\pi, \pi^*)]+W\leq 2W.
\end{align*}
\end{proof}
\vspace{-0.15in}
Clearly, random selection may be improved by picking the optimal permutation from $\cup_{k\in\cM}\Sigma_k$ instead. We term this approach Pick-Opt-Perm. Although the Pick-Rnd (Opt) -Perm algorithms are exceptionally simple and offer a $2$-approximation to the optimal solution, they have a number of drawbacks, including the fact that the aggregate is a given ranking from the clusters, which violates fairness rules of aggregates, and that its empirical performance is typically very poor. To mitigate these problems, we propose more sophisticated aggregation algorithms for both the $med-\tau$ and $med-S$ problems. 

\textbf{Case I: $d=d_{med-\tau}$}. For $C=1$, a well known method termed random pivoting proposed by Ailon et al.~\cite{ailon2010aggregation,ailon2008aggregating} offers a $2$-approximation in expectation for both the permutation and partial rank aggregation problem. In random pivoting, at each step, one element in the ranking is chosen uniformly at random and the remaining elements are partitioned based on the pairwise comparison with the pivot element. However, for the case of the \textbf{MinMax} problem with $C>1$, random pivoting may be inadequate: The difficulty lies in the fact that rankings in different classes may lead to widely disparate pairwise pivot comparisons. Another problem in this context is that while one may achieve a constant approximation in expectation for each class individually, the largest cost among classes may not be bounded due to the exchange of the expectation and maximization operators. Therefore, instead of pivoting, one must resort to a different approach to the problem. Our approach is to \emph{deterministically} round the fractional solution of a specific convex optimization problem. The deterministic rounding procedure is motivated by ideas in~\cite{van2007deterministic}. 

Let $w_{xy}^k\triangleq \frac{\lambda_k}{m_k}\sum_{i=1}^{m_k} \mathbf{1}\{\sigma_i^k(x)<\sigma_i^k(y)\}$, where $\mathbf{1}$ stands for the indicator function, and let $w_{xx}^k=0$ for all $x,k$. For a given ranking $\pi$, also define the variables $u_{xy}\triangleq \mathbf{1}\{\pi(x)<\pi(y)\}$. 
The \textbf{MinMax} problem may be stated as 
\begin{align}
\min_{\mathbf{u},q} &\quad\quad q \nonumber \\
\text{s.t.} & \quad \sum_{x,y\in[n]}  w_{xy}^ku_{yx} \leq q \quad \text{for all}\; k\in [C] \label{LPconstraint}\\
& \quad u_{xy}\in \{0,1\}, \nonumber \\
&\quad u_{xy}+u_{yx}=1 \quad \text{for all}\;  i,j \in [n],\, i\neq j \nonumber  \\
&\quad u_{xy}+u_{yz}+u_{zx}\geq 1 \quad \text{for all distincts}\;  x,y,z \in [n] \nonumber 
\end{align}
Note that if the rankings are permutations, then $w_{xy}^k+w_{yx}^k=\lambda_k,$ which is a value that only depends on $k$. 

The above integer program may be relaxed to a linear program by allowing $u_{xy}$ to take fractional values. Upon solving the linear program, one needs to round the values of $u_{xy}$. The next rounding procedure guarantees a $2$-approximation. 

Let $h_{xy}=\mathbf{1}_{u_{xy}\geq 1/2},$ if $x>y,$ and $h_{xy} = 1- h_{yx},$ if $x<y$. Let $v$ be a pivoting element for the rounding procedure and use $P_v(\mathbf{u})$ to denote the set of pairs of elements (excluding $v$) whose positions are determined by pivoting on $v$. Define
\begin{align*}
P_v(\mathbf{u})&=\{(x,y): x,\,y\in V_v,\, h_{vx}h_{yv}=1\}, \\
A_v^k(\mathbf{u})&=\sum_{x\in V_v}(h_{xv}w_{vx}^k+h_{vx}w_{xv}^k) +\sum_{(x,y)\in P_v}w_{xy}^k, \\
B_v^k(\mathbf{u})&=\sum_{x\in  V_v}(u_{xv}w_{vx}^k+u_{vx}w_{xv}^k) +\sum_{(x,y)\in P_v}(u_{xy}w_{yx}^k+u_{yx}w_{xy}^k).
\end{align*}
The rounding procedure makes iterative calls to the the following routine.
\vspace{-0.05in}
\begin{table}[htb]
\centering
\begin{tabular}{l}
\hline
\textbf{mmKT-Conv $(V,\mathbf{u})$}\\
\ 1: Choose the pivot $v\in V$ according to $v =\argmin_a \,\max_k \frac{A_a^k(\mathbf{u})}{B_a^k(\mathbf{u})}.$ \\
\ 2: Set $V_L=\emptyset, V_R=\emptyset$.\\
\ 3: For all $x\in  V_v$: \\
 \quad If $h_{xv}=1$, $V_L\leftarrow V_L\cup\{x\}$. Otherwise, $V_R\leftarrow V_R\cup\{x\}$.\\
\ 4: Return [\textbf{mmKT-Conv}$(V_L,\mathbf{u})$, $v$, \textbf{mmKT-Conv} $(V_R,\mathbf{u})$]. \\
\hline
\end{tabular}
\vspace{-0.05in}
\end{table} 
\vspace{-0.05in}
\begin{theorem}\label{Pivotthm}
The iterative application of the mmKT-Conv algorithm outputs a permutation with at most twice the cost of the optimal solution of the linear program~\eqref{LPconstraint}.
\end{theorem}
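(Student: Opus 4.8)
The plan is to prove the bound by a charging/telescoping argument over the recursion tree, reducing the whole statement to a single per-call guarantee. The first observation is that in any run of mmKT-Conv every unordered pair $\{x,y\}$ has its relative order fixed at exactly one recursive call: either at the call where one of $x,y$ is chosen as pivot (this pair is accounted for by the $(v,x)$ terms of $A$ and $B$), or at the first call whose pivot $v$ separates them into $V_L$ and $V_R$ (accounted for by the $P_v$ terms). Since the output permutation $\pi$ realizes exactly the committed decisions, for each class $k$ its cost $\sum_{x,y} w_{xy}^k \mathbf{1}\{\pi(y)<\pi(x)\}$ equals $\sum_{\text{calls } s} A_{v_s}^k(\mathbf{u})$, while the same bookkeeping gives $\sum_s B_{v_s}^k(\mathbf{u}) = \sum_{x,y} w_{xy}^k u_{yx}$, which by constraint~\eqref{LPconstraint} is at most the LP optimum $q^*$. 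Hence it suffices to establish the per-call guarantee that the pivot selected in line~1 satisfies $A_v^k(\mathbf{u}) \le 2 B_v^k(\mathbf{u})$ for every $k$; summing over all calls then yields cost at most $2q^*$ for each $k$, and taking the maximum over $k$ proves the theorem.

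Next I would reduce the per-call guarantee to the existence statement $\min_{a\in V}\max_k A_a^k(\mathbf{u})/B_a^k(\mathbf{u}) \le 2$, since the rule $v=\argmin_a\max_k A_a^k/B_a^k$ then returns a pivot with $A_v^k\le 2B_v^k$ for all $k$ at once (modulo degenerate zero-denominator cases, which are handled directly). The contributions from the direct pairs $(v,x)$ are immediate and hold coefficient-by-coefficient, hence for every $k$ simultaneously: if $h_{xv}=1$ then $u_{xv}\ge 1/2$, so the $A$-contribution $w_{vx}^k$ satisfies $w_{vx}^k\le 2u_{xv}w_{vx}^k\le 2B_v^k$, and symmetrically when $h_{vx}=1$. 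The entire difficulty is therefore concentrated in the cross pairs $(x,y)\in P_v$, where pivoting commits to ``$y$ before $x$'' and charges $w_{xy}^k$, whereas the matching LP contribution $u_{xy}w_{yx}^k+u_{yx}w_{xy}^k$ can be arbitrarily small for a fixed bad pivot; this is precisely why the pivot must be chosen carefully rather than arbitrarily.

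The crux is to control the cross pairs using the triangle inequalities $u_{xy}+u_{yz}+u_{zx}\ge 1$ together with an averaging argument over the choice of pivot, in the spirit of the deterministic rounding of~\cite{van2007deterministic}. Restricted to a triple $\{x,y,z\}$, a cross charge that disagrees with the rounded tournament $h$ occurs only when $h$ forms a directed $3$-cycle on the triple, and in that case the triangle inequality forces the fractional solution to pay at least half of the committed cost; transitive triples produce only consistent charges, for which the coefficient-wise estimate above already applies. For a single class this yields, triple by triple, $\sum_{a\in V}\bigl(A_a^k(\mathbf{u})-2B_a^k(\mathbf{u})\bigr)\le 0$, whence a pivot that is good for class $k$ exists. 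The genuinely new obstacle, and where I expect the main effort to lie, is that the MinMax objective demands a \emph{single} pivot that is good for \emph{all} classes simultaneously, and a naive per-class average only produces a (possibly different) good pivot for each $k$. I would try to close this gap by proving the per-triple inequality in a form whose validity for each $k$ depends only on the weight-independent sign pattern of the triangle slacks of $\mathbf{u}$, so that the averaged bound holds with the same combinatorial pivot accounting across every class; the minimizer of $\max_k A_a^k/B_a^k$ would then necessarily attain value at most $2$. Verifying this uniform-over-$k$ averaging, together with the tie-breaking cases built into the definition of $h$, is the delicate part; the telescoping of the first paragraph and the direct-pair estimates are routine by comparison.
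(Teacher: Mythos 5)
Your proposal retraces the paper's own proof route step for step: the recursion-tree charging (output cost for class $k$ equals $\sum_s A^k_{v_s}(\mathbf{u})$, LP cost equals $\sum_s B^k_{v_s}(\mathbf{u})\le q^*$), the coefficient-wise estimate for pairs compared directly with the pivot --- this is the paper's inequality~\eqref{ineq1} --- and the triangle-constraint-based bound for cross pairs averaged over the three possible pivots of a triple --- this is the paper's inequality~\eqref{ineq2}. Summing these over all pairs and triples gives exactly the paper's Lemma, $\sum_{v\in V}A^k_v(\mathbf{u})\le 2\sum_{v\in V}B^k_v(\mathbf{u})$ for every $k\in[C]$. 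Up to this point your argument is correct and identical in substance to the paper's.

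The problem is the step you explicitly leave open: producing \emph{one} pivot $a$ with $A^k_a(\mathbf{u})\le 2B^k_a(\mathbf{u})$ simultaneously for \emph{all} $k$. This is a genuine gap, not a deferred routine check, because the per-class Lemma does not imply it: with two pivots and two classes, values of $A^k_a-2B^k_a$ equal to $(-1,+1)$ for class $1$ and $(+1,-1)$ for class $2$ satisfy both per-class sums $\le 0$, yet no pivot is good for both classes. Worse, the specific repair you sketch --- a per-triple inequality whose validity has a class-independent, $\mathbf{u}$-determined sign pattern --- cannot exist. Take a cyclic triple with $u_{xy}=u_{yz}=u_{zx}=1/3$, so that $h_{yx}=h_{xz}=h_{zy}=1$, and three singleton classes given by the rankings $(z,y,x)$, $(x,z,y)$, $(y,x,z)$ with $\lambda_k=1$ (this $\mathbf{u}$ is LP-feasible, indeed LP-optimal, for that instance). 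The triple's contributions to $A^k_a-2B^k_a$ at the pivots $a=x,y,z$ are, respectively, $\tfrac13 w^k_{zy}-\tfrac43 w^k_{yz}$, $\tfrac13 w^k_{xz}-\tfrac43 w^k_{zx}$, and $\tfrac13 w^k_{yx}-\tfrac43 w^k_{xy}$; these are positive at $\{x,z\}$ for the first class, at $\{x,y\}$ for the second, and at $\{y,z\}$ for the third. So within a single triple the sign pattern is intrinsically class-dependent, and every pivot of the triple is locally bad for two of the three classes; any valid argument must aggregate globally across direct pairs and all triples (for $n=3$ the direct-pair terms do rescue every pivot), which is precisely what your proposal does not supply.

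You should also know that the paper does not close this gap either: it asserts that the guarantee for the pivot $v=\argmin_a\max_k A^k_a(\mathbf{u})/B^k_a(\mathbf{u})$ ``is a corollary of'' the Lemma, which is exactly the invalid implication isolated above. In that sense your write-up honestly reproduces everything the paper actually establishes and flags the step the paper glosses over --- but as it stands it is not a complete proof of Theorem~\ref{Pivotthm}, and the route you suggest for completing it fails at the per-triple level.
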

At each iteration of rounding, $A_v^k(\mathbf{u})$ denotes the cost of rounding incurred by the class $k$ of rankings, while $B_v^k(\mathbf{u})$ denotes the associated cost of the linear program for class $k$. Hence, the goal is to prove that for the given choice of the pivot $v$, we have $A_v^k(\mathbf{u})\leq 
2 B_v^k(\mathbf{u})$ for all $k\in [C]$. Suppose that $k'$ is the index of the class that maximizes $\frac{A_v^k(\mathbf{u})}{B_v^k(\mathbf{u})}$ at the first step of mmKT-Conv. Then, it suffices to show that $A_v^{k'}(\mathbf{u})\leq 2 B_v^{k'}(\mathbf{u})$. This result is a corollary of the following lemma. 
\begin{lemma}
$\sum_{v\in V} A_v^k(\mathbf{u})\leq 2\sum_{v\in V} B_v^k(\mathbf{u})$, $\forall \, k\in [C]$. 
\end{lemma}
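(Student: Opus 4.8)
The plan is to prove the pointwise inequality by reorganizing both sides as sums over ordered pairs and over ordered triples of $V$, and then checking the bound piece by piece. Fix a class $k$ and recall that $h$ is a tournament ($h_{ab}+h_{ba}=1$) produced from $\mathbf{u}$ by the rounding rule, so that $h_{ab}=1$ forces $u_{ab}\geq 1/2$. Summing the pivot-pair terms $\sum_{x\in V_v}(h_{xv}w_{vx}^k+h_{vx}w_{xv}^k)$ over all $v\in V$ makes every ordered pair appear exactly twice, while summing $\sum_{(x,y)\in P_v}w_{xy}^k$ over all $v$ collects, for each ordered triple $(v,x,y)$, the indicator $h_{vx}h_{yv}$. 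I would therefore write
\[
\sum_{v\in V} A_v^k = 2\sum_{(a,b)} h_{ab}\,w_{ba}^k + \sum_{(v,x,y)} h_{vx}h_{yv}\,w_{xy}^k,
\]
together with the analogue for $\sum_v B_v^k$, whose pair part is $2\sum_{(a,b)} u_{ab}w_{ba}^k$ and whose triple part is $\sum_{(v,x,y)} h_{vx}h_{yv}(u_{xy}w_{yx}^k+u_{yx}w_{xy}^k)$. It then suffices to establish the pair inequality and the triple inequality separately and add them. The \emph{pair part} holds term by term: whenever $h_{ab}=1$ the rounding rule gives $u_{ab}\geq 1/2$, i.e. $h_{ab}\leq 2u_{ab}$, so $\sum_{(a,b)} h_{ab}w_{ba}^k\leq 2\sum_{(a,b)} u_{ab}w_{ba}^k$.

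For the \emph{triple part} I would group the ordered triples into unordered triangles $\{a,b,c\}$ and classify each by the pattern of $h$. For a transitive triangle only one ordered triple survives (the pivot is the middle element), it contributes the cost of the unique long edge to the $A$-side, and since $h$ orders that long edge the corresponding $u$-value is $\geq 1/2$, giving the factor $2$ at once. The hard case is a cyclic triangle $a\to b\to c\to a$ in $h$: now all three ordered triples survive, contributing $w_{ab}^k+w_{bc}^k+w_{ca}^k$ to the $A$-side against $2(\beta_{ab}^k+\beta_{bc}^k+\beta_{ca}^k)$ on the $B$-side, where $\beta_{xy}^k=u_{xy}w_{yx}^k+u_{yx}w_{xy}^k$. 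A per-edge bound provably fails here, because $u$ points the ``wrong way'' on all three edges, so two separate constraints must be spent simultaneously.

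To handle the cyclic triangle I would invoke, first, the LP triangle constraint on the reversed cycle, $u_{ac}+u_{cb}+u_{ba}\geq 1$, which with $p=u_{ab},\,q=u_{bc},\,r=u_{ca}$ reads $p+q+r\leq 2$ and each of $p,q,r\geq 1/2$; and second, transitivity of the source rankings, namely that no permutation realizes the $3$-cycle, so $w_{ab}^k+w_{bc}^k+w_{ca}^k\leq 2\lambda_k$ while $w_{ab}^k+w_{ba}^k=\lambda_k$ (and likewise for the other pairs). Substituting $w_{ba}^k=\lambda_k-w_{ab}^k$, the target reduces to $2\lambda_k(p+q+r)\geq (4p-1)w_{ab}^k+(4q-1)w_{bc}^k+(4r-1)w_{ca}^k$. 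Since the right side is linear in the $w$'s, its maximum over $\{0\leq w_\cdot\leq\lambda_k,\ \sum w_\cdot\leq 2\lambda_k\}$ is attained by saturating the two largest coefficients, after which the inequality closes using $p+q+r\leq 2$ together with $\min(p,q,r)\geq 1/2$. Summing the transitive and cyclic contributions over all triangles gives the triple inequality, and adding the pair inequality completes the proof.

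The main obstacle is precisely this cyclic estimate: the factor $2$ is tight exactly on a saturated cyclic triangle (two cyclic weights equal to $\lambda_k$ and $p+q+r=2$ with one value at $1/2$), so neither the LP triangle inequality nor the transitivity of the rankings alone suffices and both must be combined. I would also flag that the orientation bookkeeping in the reorganization step is the most error-prone part and deserves careful checking, and that the partial-ranking variants ($K$ and $prS$) require re-deriving the weight identities with the tie corrections of~\eqref{eq:position}, although the same two-ingredient argument carries over.
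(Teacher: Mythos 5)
Your proof is correct, and its skeleton is the same as the paper's: you reorganize $\sum_v A_v^k$ and $\sum_v B_v^k$ into ordered-pair terms and ordered-triple terms, handle pairs by $h_{ab}=1\Rightarrow u_{ab}\geq 1/2$ (the paper's inequality~\eqref{ineq1}), and split triangles into transitive ones (the paper's Case 2) and cyclic ones. Where you genuinely diverge is the cyclic triangle, and your treatment is the one that actually closes. The paper's Case 1 asserts that the result ``follows from observing that $(1-2u_{xy})w_{yx}\leq u_{yx}(w_{yz}+w_{zx})$,'' i.e.\ from $u$-complementarity plus the triangle inequality for $w$, and it never invokes the LP constraint $u_{xy}+u_{yz}+u_{zx}\geq 1$. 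That cannot suffice: take $u_{yx}=1/2$, $u_{xz}=u_{zy}=1$, and let every ranking of class $k$ order $x$ before $z$ before $y$, so that $w_{xy}^k=w_{xz}^k=w_{zy}^k=\lambda_k$ and $w_{yx}^k=w_{zx}^k=w_{yz}^k=0$. The rounding still produces the $3$-cycle, all of the paper's cited facts hold (the observation and its two analogues hold with equality, $u_{ab}+u_{ba}=1$, and all $w$-triangle inequalities are satisfied), yet the cyclic-triangle inequality~\eqref{ineq2} fails, $2\lambda_k \not\leq \lambda_k$; the only violated hypothesis is the LP triangle constraint, here equal to $1/2$. Your argument --- reduce to $(4p-1)w_{ab}^k+(4q-1)w_{bc}^k+(4r-1)w_{ca}^k\leq 2\lambda_k(p+q+r)$, maximize the left side over the polytope $\{0\leq w_\cdot\leq\lambda_k,\ \sum w_\cdot\leq 2\lambda_k\}$, and finish with $p+q+r\leq 2$ and $\min(p,q,r)\geq 1/2$ --- is exactly the missing ingredient, and I verified each step; your tightness analysis also matches the paper's integrality-gap example. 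The one thing the paper's intended route would buy, if it worked, is generality: it uses only the triangle inequality for $w$, which survives for partial rankings, whereas your substitution $w_{ba}^k=\lambda_k-w_{ab}^k$ is permutation-specific. You flag this correctly; for the partial-ranking extension claimed later in the paper, your cyclic-case estimate would indeed need to be re-derived from the $w$-triangle inequality alone rather than from the complementarity identity.
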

\begin{proof}
To prove the claimed result, it suffices to prove that for any two distinct elements $x,y$, one has 
 \begin{align} \label{ineq1}
h_{xy}w_{yx}+h_{yx}w_{xy}\leq 2(u_{xy}w_{yx}+u_{yx}w_{xy}),
\end{align}
and for any triple of distinct elements $x,y,z$, one has 
 \begin{align} \label{ineq2}
\sum h_{xz}h_{zy}w_{yx}\leq 2\sum h_{xz}h_{zy}(u_{xy}w_{yx}+u_{yx}w_{xy}),
\end{align}
where the summation is circular over all permutations of $x, y, z$.
Both summations are taken over all possible permutations of the two (three) elements in the argument. 

The inequality \eqref{ineq1} is easy to prove: Suppose that $h_{xy}=1$. Then the sum on the left hand side equals $w_{yx}\leq 2u_{xy}w_{yx}$ which is bounded by the right hand side expression. To prove the inequality \eqref{ineq2}, consider the six variables associated with $x,y,z$, namely $h_{xy},h_{yx},h_{xz},h_{zx},h_{yz},h_{zy}$. These variables may be partitioned into two classes, $\{h_{xy},h_{zx},h_{yz}\}$ and $\{h_{yz},h_{xz},h_{zx}\}$. There are at least three variables that are 0's. Without loss of generality, suppose that the class $\{h_{xy},h_{zx},h_{yz}\}$ contains at least two 0's.\\
Case 1: Assume that $h_{xy},h_{zx},h_{yz}= 0$. Then, the difference of the left and right hand side of the inequality under consideration equals 
$$(1-2u_{xy})w_{yx}+(1-2u_{yz})w_{zy}+(1-2u_{zx})w_{xz}-$$
$$2u_{xz}w_{zx}-2u_{yx}w_{xy}-2u_{zy}w_{yz}.$$
The claimed result then follows from observing that $(1-2u_{xy})w_{yx}\leq u_{yx}(w_{yz}+w_{zx})$.\\
Case 2: Assume that $h_{xy}=1, h_{zx}, h_{yz}= 0$. The left hand side equals $w_{yx}\leq 2u_{xy}w_{yx}$ which is clearly bounded from above by the right hand side expression as $h_{xy}=1$.
\end{proof}
\textbf{Case II: $d=d_{med-S}$}. When $C=1$, the \textbf{MinMax} aggregation problem may be solved in polynomial time via weighted bipartite matching~\cite{dwork2001rankw}. However, when $C>1$, the problem is hard even if $m_k=1$ for all $k$~\cite{bachmaier2015hardness}. \\
\textbf{Step 1:} If we remove the integral constraint on the position of elements in $\pi$, the optimization problem of interest is convex and may be solved efficiently:
\begin{align}\label{convSF}
\mathbf{u}^*=\min_{\mathbf{u}\in\bR^n}\max_{k}\frac{\lambda_k}{m_k}\sum_{g=1}^{m_k}||\mathbf{u}-\sigma_g^k||_1, 
\end{align} 
where $||\mathbf{u}-\sigma_g^k||_1=\sum_{h\in[n]}|u(h)-\sigma_g^k(h)|$.\\
\textbf{Step 2 (mmSP-Conv):} We assign positions to elements according to the fractional solution $\mathbf{u}^*$ as follows. If $u^*(x)<u^*(y)$, we let $\pi(x)<\pi(y)$ for any two distinct elements $x,\,y,$ with ties broken randomly.

\begin{theorem}\label{SPthm} mmSP-Conv rounding increases the cost of the convex optimization problem~\eqref{convSF} at most twice.
\end{theorem}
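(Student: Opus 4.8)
The plan is to reduce the theorem to a single per-permutation rounding bound and then exploit the fact that the cost is a maximum of (weighted) averages of $\ell_1$ distances. Writing $\pi$ for the permutation produced by mmSP-Conv, it suffices to prove that for every permutation $\sigma$ occurring among the $\sigma_g^k$ one has $\norm{\pi-\sigma}_1 \leq 2\norm{\mathbf{u}^*-\sigma}_1$, where throughout we view $\pi$ and $\sigma$ as the real vectors of their positions. Indeed, once this per-term bound holds, multiplying by $\lambda_k/m_k$, summing over $g\in[m_k]$, and taking the maximum over $k$ gives $\max_k \frac{\lambda_k}{m_k}\sum_g \norm{\pi-\sigma_g^k}_1 \leq 2\max_k \frac{\lambda_k}{m_k}\sum_g \norm{\mathbf{u}^*-\sigma_g^k}_1$, which is precisely the assertion that the rounded cost is at most twice the fractional optimum of~\eqref{convSF}. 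The point is that a uniform factor-$2$ bound on each summand survives both averaging and maximization.

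The crux is a rearrangement lemma: among all permutations $\rho$ (viewed as vectors of positions), the sorting permutation $\pi$ returned by mmSP-Conv minimizes $\norm{\rho-\mathbf{u}^*}_1$. I would establish this by a standard exchange argument. If $u^*(x)<u^*(y)$ but some candidate $\rho$ assigns the larger position to $x$, then swapping the two positions does not increase $\sum_h |\rho(h)-u^*(h)|$, since for reals $a\leq b$ and $u^*(x)\leq u^*(y)$ one has $|a-u^*(x)|+|b-u^*(y)|\leq |b-u^*(x)|+|a-u^*(y)|$. Iterating such swaps sorts $\rho$ into the order of $\mathbf{u}^*$ without ever raising the cost, so the sorted permutation is $\ell_1$-optimal. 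When $\mathbf{u}^*$ has ties, interchanging two coordinates of equal value leaves the sum unchanged, so every admissible tie-breaking attains the same minimal value; hence the random tie-breaking used in mmSP-Conv is harmless.

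Given this lemma, the per-permutation inequality follows in one line from the triangle inequality on $(\R^n,\norm{\cdot}_1)$. Because $\sigma$ is itself a permutation and $\pi$ is the $\ell_1$-closest permutation to $\mathbf{u}^*$, we have $\norm{\pi-\mathbf{u}^*}_1 \leq \norm{\sigma-\mathbf{u}^*}_1$, and therefore
$$\norm{\pi-\sigma}_1 \leq \norm{\pi-\mathbf{u}^*}_1 + \norm{\mathbf{u}^*-\sigma}_1 \leq \norm{\sigma-\mathbf{u}^*}_1 + \norm{\mathbf{u}^*-\sigma}_1 = 2\norm{\mathbf{u}^*-\sigma}_1.$$

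The only genuine obstacle is the rearrangement lemma; the remaining steps are the triangle inequality together with the elementary observation about averaging and maximization noted above. I would also remark that the constant $2$ is tight for this argument: taking $\mathbf{u}^*$ with a tied pair and rounding it against the opposite order already forces equality, so no smaller factor is achievable by this rounding.
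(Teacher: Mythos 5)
Your proposal is correct and follows essentially the same route as the paper's proof: both establish that the mmSP-Conv output is an $\ell_1$-closest permutation to $\mathbf{u}^*$ via an exchange (rearrangement) argument, then apply the triangle inequality in $(\R^n,\norm{\cdot}_1)$ to obtain $\norm{\pi-\sigma}_1\leq 2\norm{\mathbf{u}^*-\sigma}_1$ for each $\sigma=\sigma_g^k$. Your write-up merely spells out details the paper leaves implicit (the tie-breaking case and the observation that the per-term bound survives averaging and maximization), so there is nothing substantively different to compare.
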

\begin{proof}
First, we claim that the output of mmSP-Conv, denoted by $\pi_{S}$, is in $\Pi'\triangleq\{\pi'\in\bS^n:  ||\mathbf{u}^*-\pi'||_1=\min ||\mathbf{u}^*-\pi||_1\}$. This follows since for any ranking $\pi$, if two elements $x,\,y\in[n]$ satisfy $\pi(x)>\pi(y)$ and $u^*(x)<u^*(y)$, 
we may transpose $x$ and $y$ in $\pi$ to obtain a smaller $||\mathbf{u}^*-\pi||_1$. Second, for an arbitrary permutation $\sigma$, we have 
$$||\pi_{S}-\sigma||_1\leq ||\pi_{S}-\mathbf{u}^*||_1+||\sigma-\mathbf{u}^*||_1\leq 2||\sigma-\mathbf{u}^*||_1.$$
The claim follows by setting $\sigma=\sigma_i^k,$ $i\in[m_k],$ $k\in[C]$.
\end{proof}

Note that the integrality gap of the problems~\eqref{LPconstraint}~\eqref{convSF} is $2$, as one may consider two equally weighted classes, each of which contains one single ranking, $(1,2,3,4,...)$ and $(2,1,3,4,...)$, respectively. Hence, the best approximation constant via the use of $\mathbf{u}$ cannot be less than $2$, which implies that the proposed rounding is optimal. One may expect to achieve a \emph{smaller} approximation constant by outputting the better of the two results produced by Pick-Rnd-Perm and mmKT(SP)-Conv. This approach will be discussed in the full version of the paper.

We introduce next the min-Pick-Perm algorithm for solving the $d_{min-\star}$ problem.
\begin{table}[htb]
\centering
\begin{tabular}{l}
\hline
\textbf{min-Pick-Perm $(\Sigma^1,\Sigma^2,...,\Sigma^C)$, $(\lambda_1,\lambda_2,...,\lambda_C)$.}\\
\ 1: \textbf{For} each $k\in C$ and each ranking $\sigma_i^k\in\Sigma^k$ \\
\ 2: \quad Compute Score$_i^k=\max_{j\in C/\{k\}} \lambda_j \min_{\sigma_s^j\in \Sigma^j} d_{\star}(\sigma_i^k,\sigma_s^j).$\\
\ 3: Let $(i^*,k^*)=\arg_{(i,k)}\min\;$Score$_i^k$. Output $\pi=\sigma_{i^*}^{k^*}$.\\
\hline
\end{tabular}
\vspace{-0.07in}
\end{table} 
\vspace{-0.05in}
\begin{theorem}
If $d_{\star}$ is pseudometric, then min-Pick-Perm is a $2$-approximation algorithm for the $min-\star$ problems. 
\end{theorem}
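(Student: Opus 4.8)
The plan is to exhibit a single candidate ranking whose $\mathrm{Score}$ is provably at most $2W$, and then to note that since min-Pick-Perm returns the candidate of minimum $\mathrm{Score}$, its cost can only be smaller. The only property of $d_\star$ I would invoke is the pseudometric structure (nonnegativity, $d_\star(\pi,\pi)=0$, and the triangle inequality), so the argument is uniform over $\star\in\{\tau,S,K,prS\}$.

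First I would rewrite the cost of the algorithm's output in terms of its $\mathrm{Score}$. If the output is $\pi=\sigma_{i^*}^{k^*}$, then $\pi\in\Sigma^{k^*}$ forces $d_{min-\star}(\pi,\Sigma^{k^*})=0$, so the class $k^*$ contributes nothing to $\max_k\lambda_k d_{min-\star}(\pi,\Sigma^k)$. Since all remaining terms are nonnegative, the cost of $\pi$ equals $\max_{j\neq k^*}\lambda_j d_{min-\star}(\pi,\Sigma^j)=\mathrm{Score}_{i^*}^{k^*}=\min_{i,k}\mathrm{Score}_i^k$. Hence it suffices to bound $\min_{i,k}\mathrm{Score}_i^k$ by $2W$.

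Next I would produce one good candidate. Let $\pi^*$ be the optimizer of the $min-\star$ problem with value $W$. Fix a maximum-weight class $\hat k\in\cM$, so $\lambda_{\hat k}=\lambda^*$, and let $\tau\in\Sigma^{\hat k}$ attain $d_\star(\tau,\pi^*)=d_{min-\star}(\pi^*,\Sigma^{\hat k})$. For any other class $j$, choose $\tau_j\in\Sigma^j$ attaining $d_\star(\pi^*,\tau_j)=d_{min-\star}(\pi^*,\Sigma^j)$. The triangle inequality then gives
$$\lambda_j d_{min-\star}(\tau,\Sigma^j)\leq\lambda_j d_\star(\tau,\tau_j)\leq\lambda_j d_\star(\tau,\pi^*)+\lambda_j d_\star(\pi^*,\tau_j).$$
The last term is exactly $\lambda_j d_{min-\star}(\pi^*,\Sigma^j)\leq W$ by optimality of $\pi^*$.

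The main obstacle — and the reason $\hat k$ must be drawn from $\cM$ — is the cross term $\lambda_j d_\star(\tau,\pi^*)$: the triangle inequality forces the weight of the \emph{target} class $j$ to multiply the distance from $\tau$ to $\pi^*$, whereas optimality only controls that distance through $\tau$'s own class weight $\lambda_{\hat k}$ via $\lambda_{\hat k} d_\star(\tau,\pi^*)=\lambda_{\hat k} d_{min-\star}(\pi^*,\Sigma^{\hat k})\leq W$. Choosing $\hat k$ with the globally largest weight $\lambda^*$ guarantees $\lambda_j\leq\lambda_{\hat k}$ for every $j$, so $\lambda_j d_\star(\tau,\pi^*)\leq\lambda_{\hat k} d_\star(\tau,\pi^*)\leq W$. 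Adding the two bounds yields $\lambda_j d_{min-\star}(\tau,\Sigma^j)\leq 2W$ for all $j\neq\hat k$, so the $\mathrm{Score}$ of $\tau$ is at most $2W$. Since $\tau$ is among the rankings examined by the algorithm, $\min_{i,k}\mathrm{Score}_i^k\leq 2W$, which together with the first step establishes the $2$-approximation.
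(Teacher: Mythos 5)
Your proof is correct and follows essentially the same route as the paper's: both select the element of a maximum-weight class $\hat{k}\in\cM$ closest to $\pi^*$ as a witness candidate, bound its Score by $2W$ via the triangle inequality through $\pi^*$ together with $\lambda_j\leq\lambda^*$, and conclude because the algorithm minimizes Score over all candidates. The only cosmetic difference is that the paper packages the triangle inequality as the harmonic-mean bound $\frac{\lambda_k\lambda_j}{\lambda_k+\lambda_j}d_\star(\sigma_1^k,\sigma_1^j)\leq W$ before invoking $\lambda_{k'}\leq\lambda_{\tilde{k}}$, whereas you bound the two terms $\lambda_j d_\star(\tau,\pi^*)$ and $\lambda_j d_\star(\pi^*,\tau_j)$ by $W$ directly, which is arguably cleaner.
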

\begin{proof}
By the definition of the $min-\star$ problem, each class contains at least one permutation, which without loss of generality we denote by $\sigma_1^k\in\Sigma^k,$ $k\in[C]$, that satisfies $\lambda_kd_{\star}(\pi^*,\sigma_1^k)\leq W$. As $d_{\star}$ is pseudometric, 
we have 
$$  \frac{\lambda_k\lambda_j}{\lambda_k+\lambda_j}d_{\star}(\sigma_1^k,\sigma_1^j)\leq W.$$
Next, choose an arbitrary $\tilde{k}\in\cM$ and let 
$k'=\arg\max_{j\in[C]/\{\tilde{k}\}}\lambda_{j}d_{\star}(\sigma_1^{\tilde{k}},\sigma_1^j)$. Then, 
\begin{align*}
&\min_{k\in[C]}\max_{j\in[C]/\{k\}}\lambda_{j}d_{\star}(\sigma_1^k,\sigma_1^j)\leq \lambda_{k'}d_{\star}(\sigma_1^{\tilde{k}},\sigma_1^{k'})\\
&\leq \frac{2\lambda_{\tilde{k}}\lambda_{k'}}{\lambda_{\tilde{k}}+\lambda_{k'}}d_{\star}(\sigma_1^{\tilde{k}},\sigma_1^{k'})\leq 2\max_{k,j} \frac{\lambda_k\lambda_j}{\lambda_k+\lambda_j}d_{\star}(\sigma_1^k,\sigma_1^j).
\end{align*}
Moreover, the output $\pi$ of min-Pick-Perm satisfies 
\begin{align*}
&\max_{j\in[C]}d_{\min-\star}(\pi,\Sigma^j)=\min_{k\in[C]}\min_{\sigma_i^k\in \Sigma^k}\max_{j\in[C]/{k}}\min_{\sigma_g^j\in\Sigma^j} d_{\star}(\sigma_i^k,\sigma_g^j)\\
&\leq \min_{k\in[C]}\max_{j\in[C]/\{k\}}\lambda_{j}d_{\star}(\sigma_1^k,\sigma_1^j).
\end{align*}
The result follows by combining the above inequalities.
\end{proof}

\begin{remark}
Let $(i^*,k^*)$ be the optimal indices generated by min-Pick-Perm. Define $\tilde{\Sigma}^{k^*}=\{\sigma_{i^*}^{k^*}\}$ and let
\begin{align*}
\tilde{\Sigma}^{j}=\{\sigma\in\Sigma^j: d_{\star}(\sigma_{i^*}^{k^*},\sigma)=d_{min-\star}(\sigma_{i^*}^{k^*},\Sigma^j)\}
\end{align*}
for $j\in[C]/\{k^*\}$. A $c-$approximate solution for the $med-\star$ problem with input $\{\tilde{\Sigma}_{k}\}_{k\in C}$, denoted by $\pi'$, satisfies 
\begin{align*}
&\max_{j\in[C]}\lambda_j d_{min-\star}(\pi',\Sigma^j)\leq \max_{j\in[C]}\lambda_j d_{med-\star}(\pi',\tilde{\Sigma}^j) \\
\leq &\;c\min_{\pi}\max_{j\in[C]}\lambda_j d_{med-\star}(\pi',\Sigma^j)\leq c\max_{j\in[C]}\lambda_j d_{med-\star}(\sigma_{i^*}^{k^*},\Sigma^j)\\
\leq & \;2cW. 
\end{align*}
Hence, $\pi'$ is a $2c-$approximation for the original $min-\star$ problem. Therefore, convex optimization and rounding can be used on the $med-\star$ problem. We refer to these adapted algorithms as min-mmKT-Conv and min-mmSP-Conv. 
\end{remark}

\subsection{Partial rankings}
All the algorithms proposed for permutation aggregation generalize to partial ranking aggregation. 
One may easily show that as long as the distance $d_{\star}$ defined for partial rankings is a pseudometric (e.g., $\star\in\{K,prS\}$), 
the $2$-approximation guarantees for all previous methods hold. 
To get a fractional solution in the program of mmKT-Conv, we have to change the constraint~\eqref{LPconstraint} to
\begin{align*}
\quad \frac{1}{2}T_k+\sum_{x,y\in[n]}  w_{xy}^ku_{yx} \leq w \quad \text{for all}\; k\in [C],
\end{align*} 
$$T_k=\frac{1}{m_k}\sum_{i=1}^{m_k}\sum_{1\leq x<y\leq n}\textbf{1}(\sigma_i^k(x)=\sigma_i^k(y)),$$ 
which does not depend on the type of output ranking. Also, note that $w_{xy}^k$ for partial rankings does not satisfy 
the equality $w_{xy}^k+w_{yx}^k=\lambda_k$, although the triangle inequality $w_{xy}+w_{yz}\geq w_{xz}$ still holds. 
As the proof of Theorem~\ref{Pivotthm} only requires the later inequality, the same rounding procedure offers a $2$-approximation. Also, in the optimization problem~\eqref{convSF} one has to use the definition $\sigma(x)$ for partial rankings. 
\vspace{-0.1in}
\section{Simulations}
We compare the performance of three families of algorithms: Convex optimization procedures with rounding (mmKT-Conv, mmSP-Conv, min-mmKT-Conv, min-mmSP-Conv), permutation selection (Pick-Rnd-Perm, Pick-Opt-Perm, min-Pick-Perm) and algorithms used for traditional min-median rank aggregation (FASLP-Pivot~\cite{ailon2008aggregating} and SP-Matching~\cite{dwork2001rankw}). The comparison shows that algorithms based on convex optimization yield significantly better results than naive selection methods, and that traditional aggregation algorithms are poor candidates for solving MinMax problems. 

First, we evaluate the proposed algorithms on synthetic data. The synthetic data is generated based on what we call a two-level Mallows model: First, we generate the permutations $\{\sigma^1,...,\sigma^C\}$ independently based on the Mallows distribution $\bP(\sigma^k)\propto\phi_1^{d_{\tau}(\sigma^k,e)}$~\cite{mallows1957non}. Then, for each class $k\in[C]$, we generate $m_k$ permutations $\sigma_1^k,...,\sigma_{m_k}^k$ independently according to the Mallows distribution $\bP(\sigma_i^k)\propto\phi_2^{d_{\tau}(\sigma_i^k,\sigma^k)}$. We set the number of classes to $C=3$, fix $\phi_2=0.7$ and let each class contain $m_k=10$ permutations. To control the distance between different classes, we choose $\phi_1$ from $\{0.5,0.7,0.9,1.0\}$. The objective function values for $100$ independent samples, obtained by different algorithms, are shown Table~\ref{syndata}.
\begin{table} 
\caption{Comparison of rank aggregation methods: Objective value (standard deviation)}\vspace*{-.05in}
\label{syndata}
       \begin{tabular}{c@{\hspace*{.15in}}|c|c|c|c}
       \multicolumn{5}{l}{A. $d_{med-\tau}$}  \vspace*{.01in} \\
       	  $\phi_1$ 		& 0.5 	& 0.7  & 0.9 & 1.0 \\
	  \hline
      	mmKT-Conv  	&14.5 (1.1) & 16.3 (1.4)  & 17.8 (1.3) & 17.9 (1.5)  \\
	Pick-Rnd-Perm   &17.8 (1.4) & 19.9 (2.1)   & 21.5 (1.8) & 21.6 (2.1) \\
   	Pick-Opt-Perm  	&15.9 (1.8)   & 18.1 (1.8)   & 20.0 (1.7) & 20.0 (1.6) \\
	FASLP-Pivot      &15.3 (1.4)  & 17.7 (2.1)  & 19.4 (2.2) & 19.7 (2.3) \\
		\vspace*{.02in}
      \end{tabular}
\begin{tabular}{c@{\hspace*{.15in}}|c|c|c|c}
 \multicolumn{5}{l}{B. $d_{med-S}$}  \vspace*{.01in} \\
       	  $\phi_1$ 		& 0.5 	& 0.7  & 0.9 & 1.0 \\
	  \hline
      	mmSP-Conv  	&23.3 (1.7) & 26.0 (2.1)  & 28.1 (2.3) & 28.4 (2.2)  \\
	Pick-Rnd-Perm   &27.0 (2.4) & 29.9 (2.7)   & 32.4 (2.7) & 32.1 (2.6) \\
   	Pick-Opt-Perm  	&24.5 (1.9)   & 27.5 (2.4)   & 29.9 (2.3) & 29.9 (2.1) \\
	SP-Matching     &26.3 (3.0)  & 30.5 (3.6)  & 35.3 (3.5) & 35.9 (3.4) \\
	\vspace*{.02in}
      \end{tabular}
\begin{tabular}{c@{\hspace*{.15in}}|c|c|c|c}
\multicolumn{5}{l}{C. $d_{min-\tau}$}  \vspace*{.01in} \\
       	  $\phi_1$ 		& 0.5 	& 0.7  & 0.9 & 1.0 \\
	  \hline
      	min-mmKT-Conv  	&6.9 (1.9) & 8.6 (2.3)  & 9.8 (2.6) & 10.0 (2.4)  \\
	min-Pick-Perm   &8.4 (1.7) & 10.5 (1.9)   & 11.8 (2.2) & 12.0 (1.8) \\
	FASLP-Pivot      &9.3 (1.9)  & 11.1 (2.2)  & 12.9 (2.7) & 13.1 (2.2) \\
	\vspace*{.02in}
      \end{tabular}
    \begin{tabular}{c@{\hspace*{.15in}}|c|c|c|c}
    \multicolumn{5}{l}{D. $d_{min-S}$}  \vspace*{.01in} \\
       	  $\phi_1$ 		& 0.5 	& 0.7  & 0.9 & 1.0 \\
	  \hline
      	min-mmSP-Conv  	&11.9 (2.6) & 14.1 (2.8)  & 16.1 (3.6) & 16.3 (3.1)  \\
	min-Pick-Perm   &13.9 (2.4) & 16.7 (2.7)   & 18.9 (3.0) & 18.9 (2.5) \\
   	SP-Matching      &17.1 (3.5)  & 22.4 (4.3)  & 26.2 (4.2) & 27.2 (4.0) \\
      \end{tabular}
      \vspace*{-.15in}
\end{table}

Our next test example comes from evolutionary biology, and is concerned with Mitochondrial DNA (mtDNA) genome aggregation. The aggregate in this case corresponds to an ancestral genome. The most common used rearrangement distance between two nuclear genomes is based on reversals~\cite{tesler2002efficient}, but mitochondrial DNA rearrangement studies have also involved the Kendall $\tau$ distance~\cite{chaudhuri2006tandem}. In the latter case, the authors only considered the median problem $C=1$, although  the min-max problem is equally relevant~\cite{dinu2012efficient,bachmaier2015hardness}. In our experiment, we used the mtDNA dataset from~\cite{bourque2002genome}. The dataset contains $11$ metazoan genomes with $36$ gene-blocks in some arrangement. We removed the ``signs'' of gene orders and let each genome represent one class, so that $C=11$ and $m_k=1$ for all $k$; we fixed $\lambda_k=1$. Table~\ref{realdata} shows the results. Due to page limitations, we relegate the significantly more space consuming empirical study of weighted multiclass mtDNA aggregation to the extended version of the paper. 

\begin{table} 
\caption{Mitochondrial DNA (mtDNA) aggregation}
\label{realdata}
 \vspace*{-.1in}
       \begin{tabular}{c@{\hspace*{.15in}}|c|l }
       	   & $d_{med-\tau}$ & Aggregated Sequences \\
	  \hline
	   \multirow{3}{*}{mmKT-Conv} & \multirow{3}{*}{210} &  1    10     7     2    17    12    30     9    11    23    19    20    21  \\
	   		&	& 13  35     3    15    14    25    26     6    16    32    28    34      \\   
			&	& 4 24  27	     18    36    29    31     8    33    22     5 \\
			\hline
	\multirow{3}{*}{Pick-Opt-Perm} & \multirow{3}{*}{267} &  1    27     2    17    36    20     3    29    10    11    35    12    30 \\
	&  & 21     9    19    18    28    33     7     8    16    26    14    34    13\\
	& & 24    15    32    25     4    22    23     6    31     5 \\
	\hline
	\multirow{3}{*}{FASLP-Pivot} & \multirow{3}{*}{269} &  1     2    17     7    23    12     3    20    30    21     6     9    10 \\
	&  & 11    15    19    28    25    27    18    32     8    33    24    13    34\\
	& & 14     4    35    29    26    16    36    31    22     5 \\
      	\hline
      \end{tabular}
 \vspace*{-.15in}
\end{table}

\bibliographystyle{IEEEtran}
\vspace{-0.1in}
\bibliography{HWbib}
\end{document}